\documentclass[letterpaper, 10 pt, conference]{ieeeconf}

\IEEEoverridecommandlockouts
\usepackage{amsmath,amssymb}

\usepackage{amsthm}
\usepackage{graphicx}
\usepackage{cite}
\newtheorem{remark}{Remark}
\newtheorem{lemma}{Lemma}
\usepackage{algorithmic}
\usepackage{algorithm}
\usepackage{multirow}
\usepackage{verbatim}
\DeclareMathOperator*{\argmax}{arg\,max}
\DeclareMathOperator*{\argmin}{arg\,min}

\graphicspath{ {./figures/} }

\title{\LARGE \bf Stress-Sharing for Decentralized Fault Repair in Modular Spacecraft}

\author{Sidhdharth D. Sikka$^{1,2}$, Yue Shen$^{2}$, and Shaoshuai Mou$^{1}$%
\thanks{This work has been submitted to the IEEE for possible publication. Copyright may be transferred without notice, after which this version may no longer be accessible.}%
\thanks{$^{1}$School of Aeronautics and Astronautics, Purdue University, West Lafayette, IN 47907 USA (e-mail: sikkas@purdue.edu; mous@purdue.edu).}%
\thanks{$^{2}$Manifold Research Group (e-mail: yzs.shen@gmail.com).}%
}

\begin{document}

\maketitle
\thispagestyle{empty}
\pagestyle{empty}

\begin{abstract}
Structural damage in modular spacecraft can disrupt mechanical and communication connectivity, reducing system capability. Existing approaches rely on redundancy or preplanned reconfiguration and do not enable autonomous repair under local information and physical constraints. We model the spacecraft as a lattice-constrained graph and introduce a fully decentralized, asynchronous stress-sharing repair policy inspired by biological wound healing: local distress signals guide surviving modules toward damaged regions to close fragmented gaps, after which each displaced module locally retraces its own motions to recover the pre-damage shape, using only local information and no absolute position sensing. We evaluate the policy in PyBullet rigid-body simulation across structures of up to $160$ modules, three fault densities ($10$, $20$, $30\%$), and random and localized damage. The policy consolidates the surviving modules into a single connected body: even in the most severe case tested, where $30\%$ of modules fail at random, it gathers roughly $80\%$ or more of the surviving modules into one connected component, and this fraction improves with assembly size, making the approach well suited as a swarm-scale repair policy for large modular spacecraft.
\end{abstract}

\section{Introduction}

Spacecraft on long-duration missions must tolerate failures with no opportunity for manual repair, yet traditional mitigation relies on redundancy and conservative design margins that add mass and cost while remaining vulnerable to unanticipated and cascading faults. In response, the field is shifting toward modular, reconfigurable architectures that can lower lifecycle cost and respond to damage by physically reorganizing their surviving components~\cite{zhang2023modularity}, reflecting the autonomy that on-orbit servicing and in-space assembly will require~\cite{arney2021osam}. However, most distributed-spacecraft and swarm-assembly approaches~\cite{liao2025assembly,nanjangud2024towards} still rely on attached manipulator arms or dedicated free-flying service robots to reposition modules~\cite{xue2021review}, lacking the intrinsic, highly distributed structural plasticity needed for rapid fault recovery in degraded environments.

In this paper, we consider the problem of \emph{damage-responsive reconfiguration} in modular spacecraft. Following a localized failure, a subset of modules becomes inactive and the remaining structure must restore operational connectivity using only local sensing and physically admissible motions (Fig.~\ref{fig:reconfig}). Unlike global shape planning, this setting imposes three coupled constraints: (i) motion is limited to lattice-admissible pivot operations, (ii) connectivity must be preserved throughout execution, and (iii) agents operate without centralized coordination or global state information.

\begin{figure}[t]
\centering
\includegraphics[width=0.8\columnwidth]{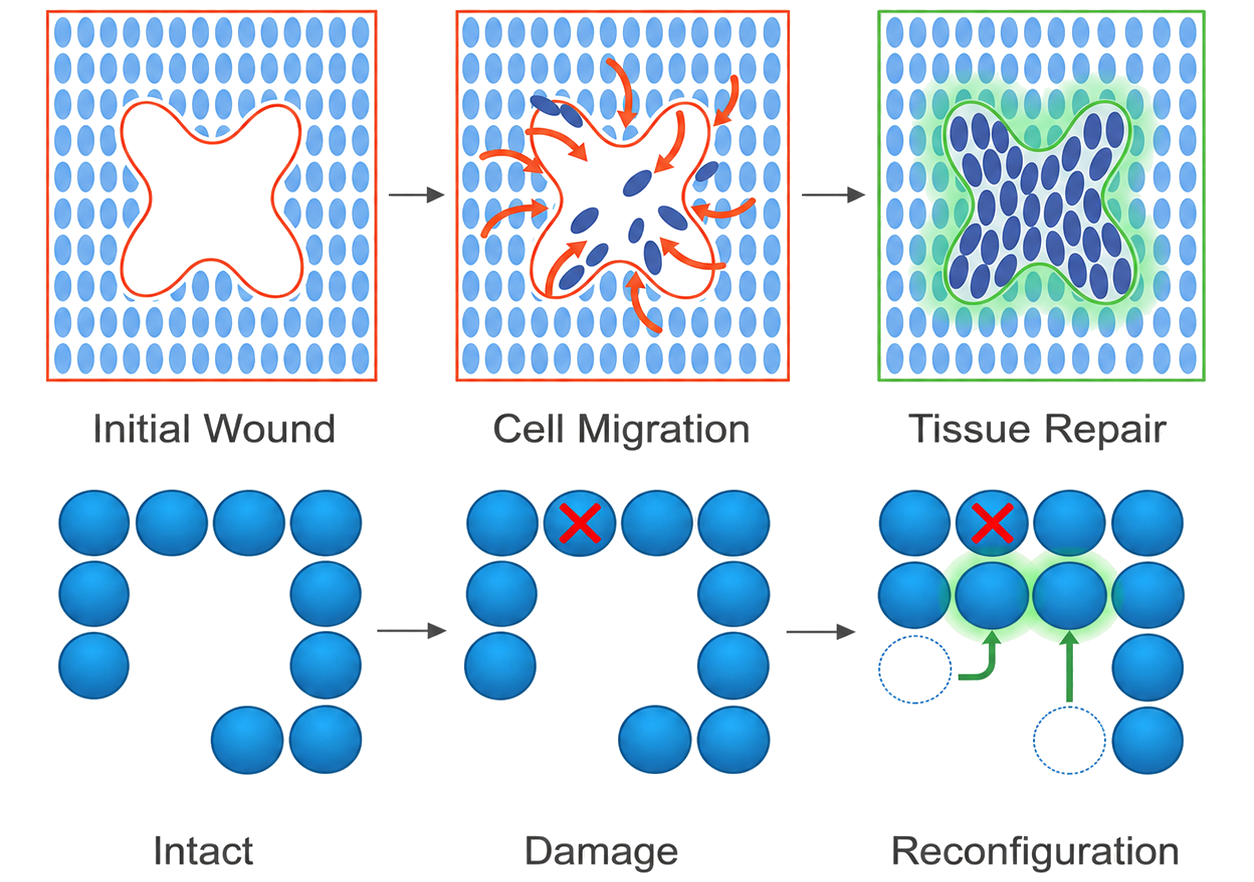}
\caption{Biological wound healing (top) inspires damage-responsive reconfiguration in a modular robotic system (bottom): neighboring modules migrate toward a localized failure to close the resulting gap.}
\label{fig:reconfig}
\end{figure}

\subsection{Related Work}

\paragraph{Modular reconfiguration systems}
A broad set of literature has studied hardware and algorithms for modular robotic reconfiguration. Lattice-based systems such as M-Blocks demonstrate dynamic pivoting through inertial actuation~\cite{romanishin2015mblocks}, while ElectroVoxel achieves pivot-based reconfiguration in microgravity~\cite{shay2022electrovoxel}. At the algorithmic level, universal reconfiguration results show that arbitrary shapes can be achieved with a constant number of helper modules~\cite{akitaya2019universal}. More generally, modular reconfiguration has been studied through decentralized control, distributed motion planning, cellular automata, and lattice-based transformation rules~\cite{naz2016distributed,zhu2017distributed,lengiewicz2019efficient}.

\paragraph{Distributed self-repair paradigms}
Classical distributed self-repair strategies often rely on paradigms that are poorly matched to spacecraft constraints. Scale-independent morphallaxis, as developed for robotic collectives and related regenerative swarm robotics, rebuilds a proportionately scaled version of the structure after damage~\cite{rubenstein2009biologically}. Other approaches, including directed-growth and hole-routing style methods for modular robots~\cite{stoy2004self,naz2016distributed}, attempt to migrate empty space or restructure the body so that missing components can be replaced.
\paragraph{Connectivity maintenance and network repair}
A related literature addresses connectivity preservation and restoration in networked multi-agent systems. In wireless sensor and actor networks, RECRA and related localized motion-based restoration algorithms relocate non-critical nodes to repair broken communication graphs using local information~\cite{imran2012localized,zhang2018autonomous}. In multi-robot systems, decentralized connectivity maintenance has been studied through graph-theoretic and control-theoretic methods, including local-global approaches, bounded-input connectivity preservation, and distributed connectivity control~\cite{khateri2020decentralized,tardioli2010enforcing,cornejo2009keeping}.

\paragraph{Biologically inspired and regenerative systems}
Biological inspiration has also played a central role in self-reconfiguration and repair. In wound healing, cells migrate toward an injury site under local chemical and mechanical cues and coordinate through signaling pathways to close the gap, all without centralized control, and this analogy directly motivates our approach. Hormone-based coordination has been used to trigger adaptive restructuring in modular robots~\cite{shen2002hormone,yang2017self}, while regenerative and developmental approaches have explored local message-passing, gradients, and self-organized recovery~\cite{rubenstein2009biologically,stoy2004self,thalamy2019survey}. More recently, Shreesha and Levin~\cite{shreesha2024stress} formalize \emph{stress sharing}: when units exchange information about their internal strain, the collective coordinates restructuring through a lightweight local channel with no central coordinator. These ideas are highly suggestive for robotics, but existing implementations do not explicitly enforce connectivity preservation and physically admissible motion during damage-responsive reconfiguration.

\subsection{Contributions}

Taken together, the literature reveals an important gap. Existing approaches either assume global planning and persistent connectivity, operate on abstract graphs without physical motion constraints, or provide bio-inspired coordination mechanisms without explicitly enforcing connectivity preservation during execution. To the best of our knowledge, there is no existing framework for \emph{damage-responsive repair} in modular robotic spacecraft that simultaneously satisfies the following requirements: strictly local decision-making, lattice-constrained physically admissible motion, and connectivity-safe execution throughout the repair process.

This paper contributes a fully decentralized, asynchronous framework that repairs a damaged modular spacecraft using only local sensing and communication, pairing a bio-inspired stress-sharing policy for connectivity restoration with a shape-recovering restructuring phase, together with a high-fidelity PyBullet rigid-body evaluation using physically realized rolling-sphere pivots on assemblies of up to $160$ modules at $10$--$30\%$ damage.

\section{Problem Formulation}
\label{sec:problem}

We model a modular spacecraft as a lattice-connected assembly of interchangeable modules, where we assume only the relative position of the modules is relevant and not their relative attitudes. At time $t$, the system is a directed graph $\mathcal{G}_t = (V, E_t, g_t)$, where $V$ is the set of modules (vertices; $u,v,w \in V$ denote arbitrary modules) and $E_t \subseteq V \times V$ the set of directed edges representing physical connections. The map $g_t : E_t \to \mathbb{R}^3$ assigns each directed edge $(u,v)$ its relative translation $g_t(u,v) = \boldsymbol{\rho}_{uv} \in \mathbb{R}^3$, with inverse $g_t(v,u) = -\boldsymbol{\rho}_{uv} = \boldsymbol{\rho}_{vu}$. Because all edge transformations are pure translations, composition along any path reduces to vector addition: for consecutive edges $(u,v),(v,w)\in E_t$, $g_t(u,w) = \boldsymbol{\rho}_{uv} + \boldsymbol{\rho}_{vw}$. Each relative translation is constrained to a unit lattice direction, $\boldsymbol{\rho}_{uv}\in\{\pm \hat{\boldsymbol{x}},\, \pm \hat{\boldsymbol{y}},\, \pm \hat{\boldsymbol{z}}\}$ with $\|\boldsymbol{\rho}_{uv}\| = 1$. For a path $u \rightsquigarrow w = (v_0, \dots, v_k)$ with $v_0=u$, $v_k=w$, and $(v_{i-1},v_i) \in E_t$, the displacement function $d_t(u,w) = \sum_{i=1}^{k} g_t(v_{i-1},v_i)$ returns the relative displacement between two vertices by composing the lattice translations along the path. We assume cycle consistency, $d_t(u,u) = \boldsymbol{0}$.

\subsection{Active Subgraph and Damage}

A subset $\bar{V}_t \subseteq V$ denotes the set of \emph{active} modules at time $t$, with active edge set $\bar{E}_t \subseteq \bar{V}_t \times \bar{V}_t$ and active subgraph $\bar{\mathcal{G}}_t = (\bar{V}_t, \bar{E}_t, \bar{g}_t)$, where $\bar{g}_t$ restricts the edge map to active edges. Only active modules participate in reconfiguration, and $\bar{u}, \bar{v}, \bar{w} \in \bar{V}_t$ denote arbitrary active modules. A damage event at time $t$ induces a new active set $\bar{V}_{t+1} = \bar{V}_t \setminus F_t$, where $F_t \subseteq \bar{V}_t$ is the set of damaged or failed modules at that time.
All physical connections remain present in the underlying graph $E_t$, but edges incident to inactive vertices are excluded from $\bar{E}_{t+1}$.
As a result, the active subgraph $\bar{\mathcal{G}}_{t+1}$ may become disconnected, necessitating reconfiguration to restore structural or functional connectivity. The neighbor set for a vertex $u$ at time $t$ is defined as $N_t(u) := \{\, w \mid (u,w) \in E_t \,\}$, and the \textit{active} neighbor set for a vertex $\bar{u}$ is defined as $\bar{N}_t(\bar{u}) := \{\, \bar{w} \mid (\bar{u},\bar{w}) \in \bar{E}_t \,\}$.

\subsection{Detachability and Attachability}
\label{sec:detachattach}

A vertex $\bar{u}$ may participate in reconfiguration subject to the
following rules.

\paragraph{Detachability}
A vertex $\bar{u}$ may detach from a neighboring vertex
$v$ by removing both directed edges $(\bar{u},v)$ and $(v,\bar{u})$ from $E_t$,
resulting in
\begin{equation}
\label{eq:detach}
E_{t+1}
\;=\;
E_t \setminus \{(\bar{u},v),(v,\bar{u})\},
\end{equation}
if and only if $|\bar{N}_{t+1}(\bar{u})| \ge 1$. That is, the vertex initiating detachment must remain connected to at
least one other active neighbor after the operation. The vertex $v$ may
be either active or inactive.

\paragraph{Attachability}
A vertex $\bar{u}$ may attach to a vertex $v$ by adding
directed edges $(\bar{u},v)$ and $(v,\bar{u})$ to the graph,
\begin{equation}
\label{eq:attach}
E_{t+1}
\;=\;
E_t \cup \{(\bar{u},v),(v,\bar{u})\},
\end{equation}
if and only if the following conditions hold:
(i) the relative displacement is a unit lattice direction, $\boldsymbol{\rho}_{\bar{u}v}\in\{\pm \hat{\boldsymbol{x}},\, \pm \hat{\boldsymbol{y}},\, \pm \hat{\boldsymbol{z}}\}$, and (ii) that direction is not already occupied by an existing edge incident to $v$, i.e.\ $\boldsymbol{\rho}_{\bar u v} \neq \boldsymbol{\rho}_{xv}$ for all $x \in N_t(v)$. When the attachment occurs, the edge transformations are $g_{t+1}(\bar{u},v) = \boldsymbol{\rho}_{\bar{u}v}$ and $g_{t+1}(v,\bar{u}) = -\boldsymbol{\rho}_{\bar{u}v}$.

\subsection{Shape Similarity}

We summarize the shape of the active subgraph at time $t$ by its
\emph{inter-module distance matrix} $P_t$, with entries $[P_t]_{\bar{u}\bar{v}} = \| d_t(\bar{u},\bar{v}) \|_2$ over $\bar{u},\bar{v} \in \bar{V}_t$, the pairwise distances between active modules.

For two configurations with inter-module distance matrices $P$ and $Q$,
their shape difference is the square-loss \emph{Gromov--Wasserstein} (GW)
discrepancy~\cite{memoli2011gromov}
\[
\mathrm{diff}(P,Q)
=
\min_{\pi \in \Pi(P,Q)}
\left(
\sum_{i,k,j,l}
\big|P_{ik}-Q_{jl}\big|^2\,
\pi_{ij}\pi_{kl}
\right)^{1/2},
\]
where $\Pi(P,Q)$ denotes the set of admissible correspondences between the
active vertex sets represented by $P$ and $Q$. Each correspondence is
represented by a nonnegative matrix $\pi$ whose row and column sums are
uniform over the active vertices of the respective configurations, with
$\pi_{ij}$ indicating how strongly active vertex $i$ in the first
configuration is matched to active vertex $j$ in the second. The matrices are normalized by their common maximum, so
$\mathrm{diff}(P,Q)\in[0,1]$.

Following a damage event at time $t$, the active subgraph
$\bar{\mathcal{G}}_{t+1}$ may be disconnected.
We seek a sequence of admissible reconfiguration operations that restores
active connectivity while minimizing deviation from the pre-damage
shape. Writing $P_t$ for the pre-damage shape and $P_{t_{\mathrm{f}}}$ for the terminal shape at time $t_{\mathrm{f}} \ge t+1$, the reconfiguration problem is to minimize $\mathrm{diff}(P_t, P_{t_{\mathrm{f}}})$ over admissible operation sequences $\{\bar{\mathcal{G}}_\tau\}_{\tau=t+1}^{t_{\mathrm{f}}}$
such that $\bar{\mathcal{G}}_{t_{\mathrm{f}}}$ is connected and every intermediate operation obeys the detachability and attachability rules~\eqref{eq:detach}--\eqref{eq:attach} of Section~\ref{sec:detachattach}.

\section{Proposed Approach: Distributed Stress-Sharing Repair}
\label{sec:approach}

\subsection{Failure Signals and Directional Propagation}

We propose an asynchronous distributed policy inspired by stress sharing: failure information diffuses through the active subgraph, biasing non-critical modules to migrate toward the fault while preserving connectivity. This coagulation process is detailed in Algorithm \ref{alg:coagulation}. The full proposed approach is summarized in Figure \ref{fig:repair_process}.

\begin{figure}[!t]
  \centering
  \includegraphics[width=\linewidth]{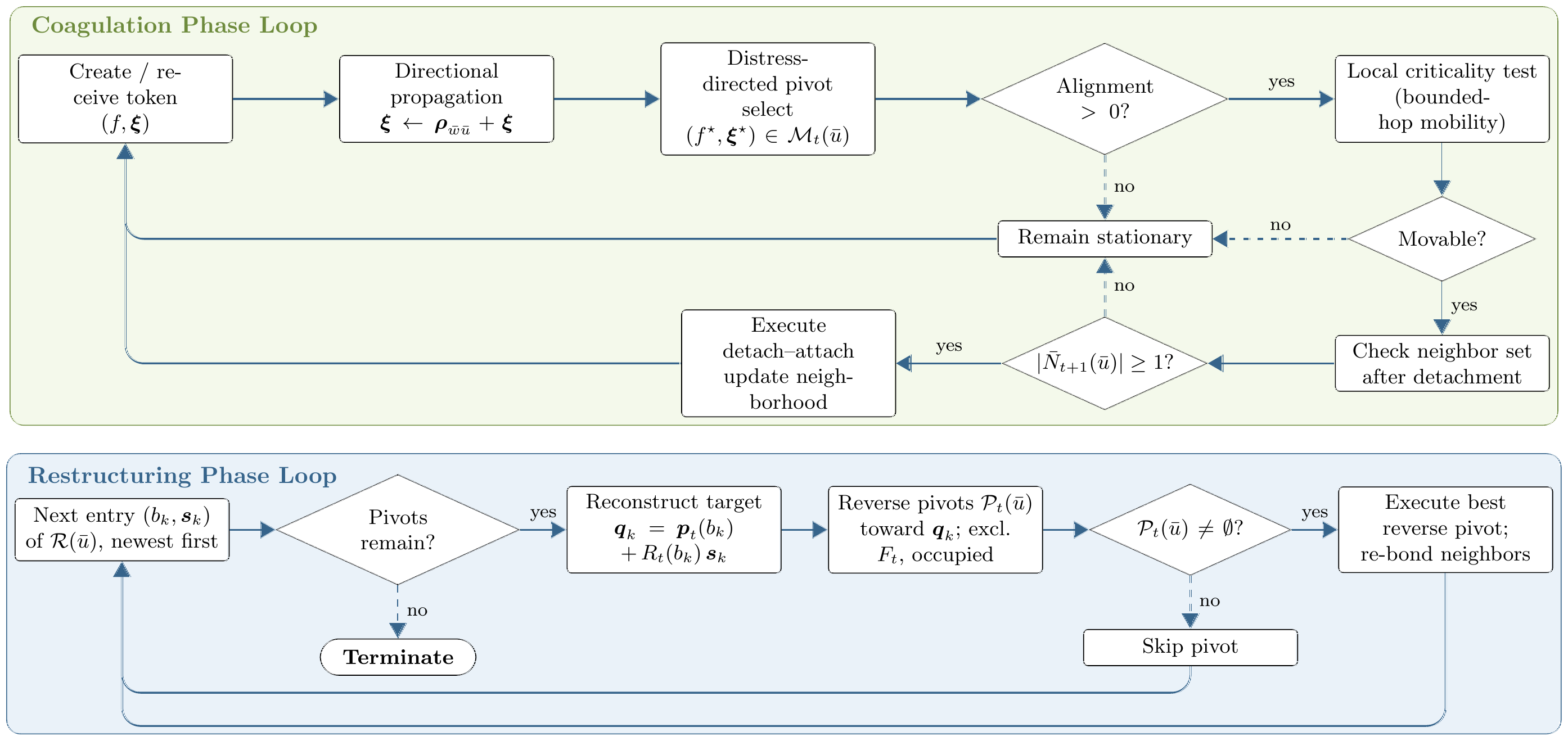}
  \caption{Per-agent control flow of the two-phase distributed repair process. \emph{Top (coagulation):} an agent propagates a distress token and, if a pivot improves alignment with the fault and the bounded-hop criticality test certifies it as movable, executes a connectivity-safe detach--attach toward the fault; otherwise it stays and forwards. \emph{Bottom (restructuring):} each displaced agent retraces its coagulation pivots in reverse, reconstructing each source cell relative to the current pose of its bonded neighbor and executing the connectivity-safe reverse pivot.}
  \label{fig:repair_process}
\end{figure}

Each ego agent $\bar{u}$ maintains a
set of \emph{distress tokens} $\mathcal{M}_t(\bar{u}) \subseteq F_t \times \mathbb{R}^3$, where a token $(f,\boldsymbol{\xi})$ indicates a suspected failed vertex
$f \in F_t$ and a direction $\boldsymbol{\xi}$ expressed in the frame of $\bar{u}$.

\paragraph{Local detection}
If $\bar{u}$ detects that a former neighbor $f$ is inactive (e.g., heartbeat
timeout), it generates the token $(f,\boldsymbol{\xi})$ with $\boldsymbol{\xi} := \boldsymbol{\rho}_{\bar{u}f}$, the last-known displacement prior to deactivation.

\paragraph{Propagation}
When $\bar{u}$ receives or creates a token $(f,\boldsymbol{\xi})$, it
updates and forwards it by composing the edge translation, $\boldsymbol{\xi} \leftarrow \boldsymbol{\rho}_{\bar{w}\bar{u}} + \boldsymbol{\xi}$, broadcasting $(f,\boldsymbol{\xi})$ to neighbors $\bar{w}\in\bar{N}_t(\bar{u})$.
Because path composition is additive under pure translations,
$\boldsymbol{\xi}$ approximates the displacement from neighbors $\bar{w}$ toward the fault location.

\subsection{Local Criticality Test}
\label{sec:crittest}

When $\mathcal{M}_t(\bar u)$ is nonempty, the ego agent $\bar u$ must decide
whether it can move without creating a new disconnect in the active graph
$\bar{\mathcal G}_t$.

We use the conservative local mobility test from prior network-repair
work~\cite{zhang2018autonomous}, parameterized by a safety radius
$r_{\mathrm{safe}}\ge 2$: the largest number of hops over which $\bar u$ verifies
that its neighbors would remain mutually connected after it leaves. Let
$\bar{\mathcal G}_t-\bar u$ be the active graph with $\bar u$ and its incident
edges deleted, and $d_{\bar{\mathcal G}_t-\bar u}(\bar x,\bar v)$ the
shortest-path distance between $\bar x$ and $\bar v$ in that graph ($\infty$ if
disconnected). The agent $\bar u$ is declared \emph{movable} if it is a leaf
($|\bar N_t(\bar u)|\le 1$) or every pair of distinct neighbors stays connected
within $r_{\mathrm{safe}}$ hops after its disconnection, i.e.\ $d_{\bar{\mathcal G}_t-\bar u}(\bar x,\bar v)\le r_{\mathrm{safe}}$ for all distinct $\bar x,\bar v\in \bar N_t(\bar u)$.
The test thus asks a purely local question: if $\bar u$ disappears, can all of
its former neighbors still reach one another by short paths? We use
$r_{\mathrm{safe}}=2$ throughout, the minimal admissible radius.

\begin{lemma}[Conservative sufficiency of the local mobility test]
\label{lem:two_hop_conservative}
Let $\bar{\mathcal G}_t=(\bar V_t,\bar E_t,\bar g_t)$ be the active graph. If
$\bar u$ passes the $r_{\mathrm{safe}}$ local mobility test, then disconnecting
$\bar u$ does not disconnect the active component containing $\bar u$.

The converse is false: deleting $\bar u$ can be globally safe even when the
bounded-hop test rejects it.
\end{lemma}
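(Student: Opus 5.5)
Both parts are direct graph arguments. The only nontrivial point is pinning down what ``disconnecting $\bar u$ does not disconnect the component'' means; I read it as: $C-\bar u$ remains connected, where $C$ is the connected component of $\bar{\mathcal G}_t$ that contains $\bar u$. So we must show that any two vertices of $C\setminus\{\bar u\}$ are joined by a path avoiding $\bar u$.

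\textbf{Sufficiency.} We split on the two ways the test can be passed.

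\emph{Leaf case.} Suppose $|\bar N_t(\bar u)|\le 1$. Take $\bar a,\bar b\in C\setminus\{\bar u\}$ and any simple path between them in $C$. If this path visited $\bar u$ as an interior vertex, $\bar u$ would need two distinct neighbors on the path, which is impossible. So the path already avoids $\bar u$. (If $\bar u$ is isolated, $C-\bar u$ is empty and the claim is vacuous.)

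\emph{Non-leaf case.} Suppose every pair of distinct $\bar x,\bar v\in\bar N_t(\bar u)$ satisfies $d_{\bar{\mathcal G}_t-\bar u}(\bar x,\bar v)\le r_{\mathrm{safe}}<\infty$. Take $\bar a,\bar b\in C\setminus\{\bar u\}$ and a path $P$ in $C$ between them.
\begin{itemize}
\item Each time $P$ passes through $\bar u$, it enters from some neighbor $\bar x$ and leaves to some neighbor $\bar v$.
\item If $\bar x=\bar v$, delete that detour from $P$.
\item If $\bar x\neq\bar v$, replace $\bar x,\bar u,\bar v$ by the path of length at most $r_{\mathrm{safe}}$ from $\bar x$ to $\bar v$ in $\bar{\mathcal G}_t-\bar u$, which the test guarantees exists.
\end{itemize}
Each such splice removes one occurrence of $\bar u$ and introduces none, because the inserted path lies in $\bar{\mathcal G}_t-\bar u$. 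Inducting on the number of occurrences of $\bar u$ in $P$ yields an $\bar a$--$\bar b$ walk avoiding $\bar u$, hence a path. Therefore $C-\bar u$ is connected. Note that the bound $r_{\mathrm{safe}}$ plays no role in this argument; only the finiteness of the distances is used.

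\textbf{Failure of the converse.} I would give an explicit lattice-realizable counterexample: a cycle of length greater than $r_{\mathrm{safe}}+2$.
\begin{itemize}
\item Take $r_{\mathrm{safe}}=2$ and the planar ring of $8$ modules around a $3\times3$ square with its center removed.
\item Pick a corner module $\bar u$. Its two neighbors are at distance $6$ in $\bar{\mathcal G}_t-\bar u$, which exceeds $2$, so the test rejects $\bar u$.
\item Yet $\bar{\mathcal G}_t-\bar u$ is a path on the remaining $7$ modules, hence connected, so deleting $\bar u$ is globally safe.
\end{itemize}
The same construction works for any $r_{\mathrm{safe}}$ by enlarging the ring.
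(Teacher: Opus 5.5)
Your proposal is correct and follows essentially the same route as the paper. Both have the same three parts: the leaf case; the observation that passing the test forces all neighbors of $\bar u$ into one component of $\bar{\mathcal G}_t-\bar u$ (your splicing argument makes explicit the step from this to connectivity of $C-\bar u$, which the paper leaves implicit); and a counterexample from a cycle of length $\ell>r_{\mathrm{safe}}+2$, where the paper states the general even-cycle family with the six-cycle as the first rejected case for $r_{\mathrm{safe}}=2$, and your octagon is one instance of it.
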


\begin{proof}
If $|\bar N_t(\bar u)|\le 1$, then $\bar u$ is a leaf or isolated vertex, and
removing it cannot disconnect any two other vertices. Now suppose
$|\bar N_t(\bar u)|>1$. Because $\bar u$ passes the test, every pair of distinct
neighbors $\bar x,\bar v\in \bar N_t(\bar u)$ is joined by a path
$\bar x \rightsquigarrow \bar v \subseteq \bar{\mathcal G}_t-\bar u$ of length at
most $r_{\mathrm{safe}}$, so all former neighbors of $\bar u$ lie in one
connected component of $\bar{\mathcal G}_t-\bar u$.

For non-necessity, take a simple even cycle of length $\ell$ and any vertex
$\bar u$ on it. Deleting $\bar u$ leaves a replacement path of length $\ell-2$
between its two former neighbors, so the deletion is globally safe; yet the
bounded-hop test accepts it only if $\ell-2\le r_{\mathrm{safe}}$, i.e.\
$\ell\le r_{\mathrm{safe}}+2$. Thus any even cycle with
$\ell > r_{\mathrm{safe}}+2$ is globally safe but locally rejected (the first
rejected cycle is the six-cycle for $r_{\mathrm{safe}}=2$ and the octagon for
$r_{\mathrm{safe}}=4$).
\end{proof}

\begin{remark}[Conservatism]
\label{rem}
Increasing $r_{\mathrm{safe}}$ admits more moves at the cost of a larger local search; eliminating the conservatism entirely would require global articulation-point information, violating the decentralized assumption of this work.
\end{remark}

\subsection{Distress-Directed Pivot Selection}
\label{sec:pivotselection}
Once an agent $\bar{u}$ is certified movable, it selects the closest target distress token $(f^\star, \boldsymbol{\xi}^\star) = \argmin_{(f,\boldsymbol{\xi})\in\mathcal{M}_t(\bar{u})} \|\boldsymbol{\xi}\|_2$. The agent evaluates all physically admissible pivot motions, meaning those satisfying the detachability and attachability constraints in Section~\ref{sec:detachattach}, where each candidate motion induces a local lattice displacement $\Delta \boldsymbol{p} \in \mathbb{R}^3$. Let $\langle \cdot\, ,\cdot\rangle$ denote the inner product of two vectors in $\mathbb{R}^3$. To prioritize movement toward the fault, the agent selects the pivot that maximizes alignment with the target direction:
\begin{equation}
\Delta \boldsymbol{p}^\star = \arg\max_{\Delta \boldsymbol{p}} \langle \Delta \boldsymbol{p}, \boldsymbol{\xi}^\star \rangle,
\label{eq:pivot_alignment}
\end{equation}
subject to the \emph{alignment constraint} $\langle \Delta \boldsymbol{p}^\star, \boldsymbol{\xi}^\star \rangle > 0$.

\paragraph{Anti-oscillation memory and stochastic exploration}
Each agent keeps a small local set $\mathcal{H}_t(\bar{u})$ of previously visited module pairs and rejects any candidate pivot already in it, breaking oscillations on symmetric structures where two cells repeatedly tie under~\eqref{eq:pivot_alignment}; once every admissible pair is exhausted the agent remains stationary, giving an agent-level termination certificate. When no admissible pivot strictly improves alignment, the agent explores: with probability $\varepsilon$ (set to $1.0$) it takes an admissible pivot chosen uniformly at random rather than staying put. This is the primary deadlock-breaking mechanism, letting agents escape local configurations while $\mathcal{H}_t(\bar{u})$ prevents revisits; every exploratory pivot still passes the criticality test and respects the per-agent motion budget, so exploration can neither break connectivity nor run unbounded.

\begin{algorithm}[t]
\caption{Coagulation Policy}
\label{alg:coagulation}
\begin{algorithmic}[1]
\STATE \textbf{Input:} agent $\bar u$; tokens $\mathcal{M}_t(\bar u)$; budget $a_{\mathrm{move}}(\bar u),\,a_{\mathrm{fwd}}(\bar u)$
\STATE \textbf{while} $a_{\mathrm{move}}(\bar u)$ \textbf{and} $\,a_{\mathrm{fwd}}(\bar u) > 0$:
\STATE \hspace{1em} \textbf{for} $f \in N_t(\bar u)$ \textbf{with} $f \notin \bar{V}_t$ \textbf{and} $a_{\mathrm{fwd}}(\bar u) > 0$:
\STATE \hspace{2em} $\mathcal{M}_t(\bar u) \gets \mathcal{M}_t(\bar u) \cup \{(f,\,\boldsymbol{\rho}_{\bar u f})\}$
\STATE \hspace{1em} \textbf{if} $\mathcal{M}_t(\bar u) = \emptyset$: \STATE \hspace{2em} \textbf{continue}
\STATE \hspace{1em} $\displaystyle (f^\star,\boldsymbol{\xi}^\star) \gets \argmin_{(f,\boldsymbol{\xi})\,\in\,\mathcal{M}_t(\bar u)} \lVert\boldsymbol{\xi}\rVert$ 
\STATE \hspace{1em} \textbf{if} $a_{\mathrm{move}}(\bar u) > 0$ \textbf{and} Criticality Test (Sec.~\ref{sec:crittest}):
\STATE \hspace{2em} $\displaystyle
\Delta\boldsymbol{p}^\star \gets
\argmax_{\substack{
\Delta\boldsymbol{p}\notin\mathcal{H}_t(\bar u)
}}
\left\langle \Delta\boldsymbol{p},\,\boldsymbol{\xi}^\star \right\rangle$
\STATE \hspace{2em} \textbf{if} $\langle \Delta\boldsymbol{p}^\star,\boldsymbol{\xi}^\star\rangle > 0$ \textbf{or} $\mathrm{rand}() < \varepsilon$:
\STATE \hspace{3em} execute $\Delta\boldsymbol{p}^\star$ about bonded neighbor $b$;
\STATE \hspace{3em} $a_{\mathrm{move}}(\bar u)\!\mathrel{-}=\!1$;
\STATE \hspace{3em} $\mathcal{H}_{t+1}(\bar u)\gets\mathcal{H}_t(\bar u)\cup\{\Delta\boldsymbol{p}^\star\}$;
\STATE \hspace{3em} $\mathcal{R}(\bar u).\mathrm{append}(b,\boldsymbol{s})$; (Sec. \ref{sec:reformation})
\STATE \hspace{3em} \textbf{continue}
\STATE \hspace{1em} \textbf{for} $\bar w \in \bar{N}_t(\bar u)$: 
\STATE \hspace{2em} $\mathcal{M}_{t+1}(\bar w) \gets \mathcal{M}_t(\bar w) \cup \{(f^\star,\,\boldsymbol{\rho}_{\bar w\bar u}+\boldsymbol{\xi}^\star)\}$ 
\STATE \hspace{1em} $a_{\mathrm{fwd}}(\bar u)\!\mathrel{-}=\!1$
\end{algorithmic}
\end{algorithm}

\subsection{Action Budgets and Message Complexity}
\label{sec:token_lifecycle}

\paragraph{Action budgets}
Each agent begins the phase with two integer budgets: a motion budget $a_{\mathrm{move}}$ and a communication budget $a_{\mathrm{fwd}}$. A movable token-holder spends one $a_{\mathrm{move}}$ per executed pivot; if it cannot or does not move, it instead relays the token, spending one $a_{\mathrm{fwd}}$, and a fault-adjacent source likewise stops seeding once its $a_{\mathrm{fwd}}$ is spent. Agents pivot concurrently under a bounded-hop exclusion lock ($r_{\mathrm{exc}}=4$) with a fresh random visit order each tick, assuming no global clock or priority scheduling. A phase ends at \emph{quiescence}, when every agent is budget-exhausted or idle and none is pivoting; the finite budgets ($a_{\mathrm{move}}=5$, $a_{\mathrm{fwd}}=50$) bound moves and forwards, making this a genuine fixed point rather than a global termination condition.

\paragraph{Message complexity}
A forwarding wave at $\bar{u}$ emits at most $|\bar{N}_t(\bar{u})|$ messages, so a tick in which every holder forwards costs $O(|\bar{E}_t|)$ token-creations, and the per-agent communication budget caps the cumulative count over a phase at $O(|\bar V_t|\,a_{\mathrm{fwd}})$. Total messages to convergence are problem-dependent, governed by the number of pivots and the wavefront geometry, so Section~\ref{sec:simulation} reports move counts as the operative measure of repair effort.

\subsection{Restructuring After Coagulation}
\label{sec:reformation}

After coagulation the structure is reconnected but its shape is distorted, since agents migrated away from their pre-damage positions. As an optional second phase, each displaced agent attempts to recover shape by retracing its own coagulation pivots in reverse (Fig.~\ref{fig:repair_process}, right). To this end each agent records the ordered sequence $\mathcal{R}(\bar u)=((b_1,\boldsymbol{s}_1),\dots)$ of pivots it executes, where $b_k$ is the bonded neighbor rotated about and $\boldsymbol{s}_k$ the pre-pivot position in $b_k$'s body frame. Replaying $\mathcal{R}(\bar u)$ newest-first, the target for step $k$ is the source cell reconstructed against the neighbor's current pose, $\boldsymbol{q}_k=\boldsymbol{p}_t(b_k)+R_t(b_k)\,\boldsymbol{s}_k$; the agent takes the criticality-gated admissible pivot that most reduces $\|\boldsymbol{p}_t(\bar u)-\boldsymbol{q}_k\|$ (the exact inverse pivot when $b_k$ is still bonded), skipping steps whose neighbor is inactive. As the results below show, this recovers only modest shape, so we treat restructuring as a lightweight add-on and leave stronger recovery to future work.

\begin{table}[t]
\centering
\caption{Results of Monte Carlo Trials in Simulated Damage Scenarios}
\label{tab:results_summary_stacked}
\setlength{\tabcolsep}{3.5pt}
\renewcommand{\arraystretch}{1.1}

\begin{tabular}{p{1.55cm} c|ccc|ccc}
\hline
 & & \multicolumn{3}{c|}{\textbf{Tree (\% damaged)}} & \multicolumn{3}{c}{\textbf{FC (\% damaged)}} \\
\textbf{Metric} & $\boldsymbol{n}$ & \textbf{10\%} & \textbf{20\%} & \textbf{30\%} & \textbf{10\%} & \textbf{20\%} & \textbf{30\%} \\
\hline

\multirow{3}{=}{Full\\Reconnection\\Rate (\%)}
& 10  & 87 & 71 & 45 & 85 & 68 & 51 \\
& 80  & 56 & 30 & 6  & 45 & 18 & 3 \\
& 160 & 37 & 10 & 1  & 25 & 3  & 0 \\

\hline
\multirow{3}{=}{Restoration\\(\%)}
& 10  & 96 & 92 & 85 & 96 & 92 & 85 \\
& 80  & 98 & 96 & 91 & 97 & 93 & 84 \\
& 160 & 98 & 97 & 93 & 98 & 95 & 88 \\

\hline
\multirow{3}{=}{Shape diff.\\(\%)}
& 10  & 18 & 19 & 19 & 18 & 19 & 19 \\
& 80  & 5  & 7  & 8  & 6  & 8  & 7 \\
& 160 & 5  & 7  & 7  & 5  & 6  & -- \\

\hline
\multirow{3}{=}{Phase 1\\Moves}
& 10  & 6   & 7   & 6   & 6   & 7   & 7 \\
& 80  & 75  & 69  & 47  & 85  & 73  & 49 \\
& 160 & 201 & 167 & 99  & 210 & 154 & 94 \\
\hline
\end{tabular}
\end{table}

\begin{figure}[!t]
  \centering
  \includegraphics[width=\linewidth]{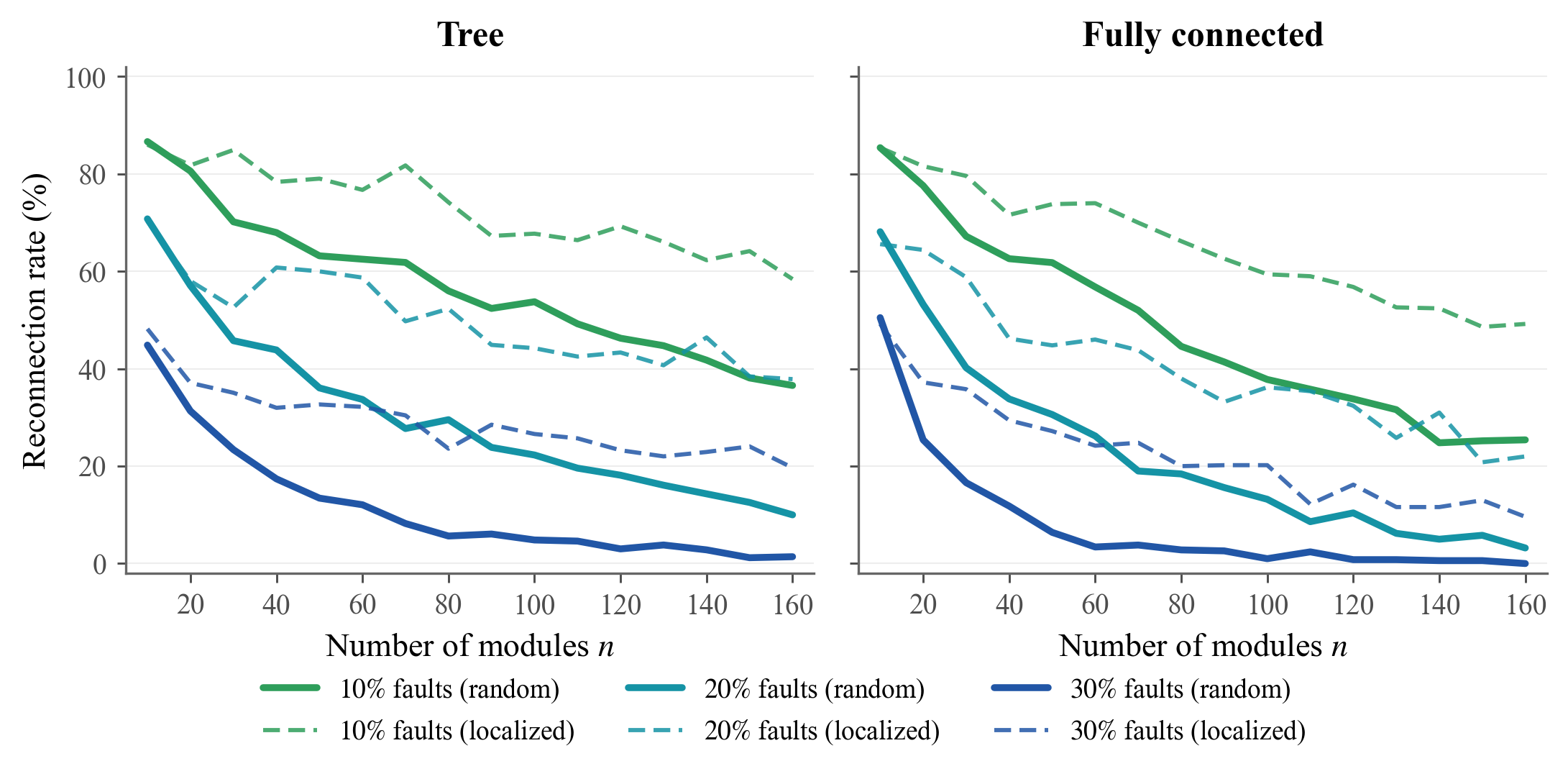}\\[4pt]
  \includegraphics[width=\linewidth]{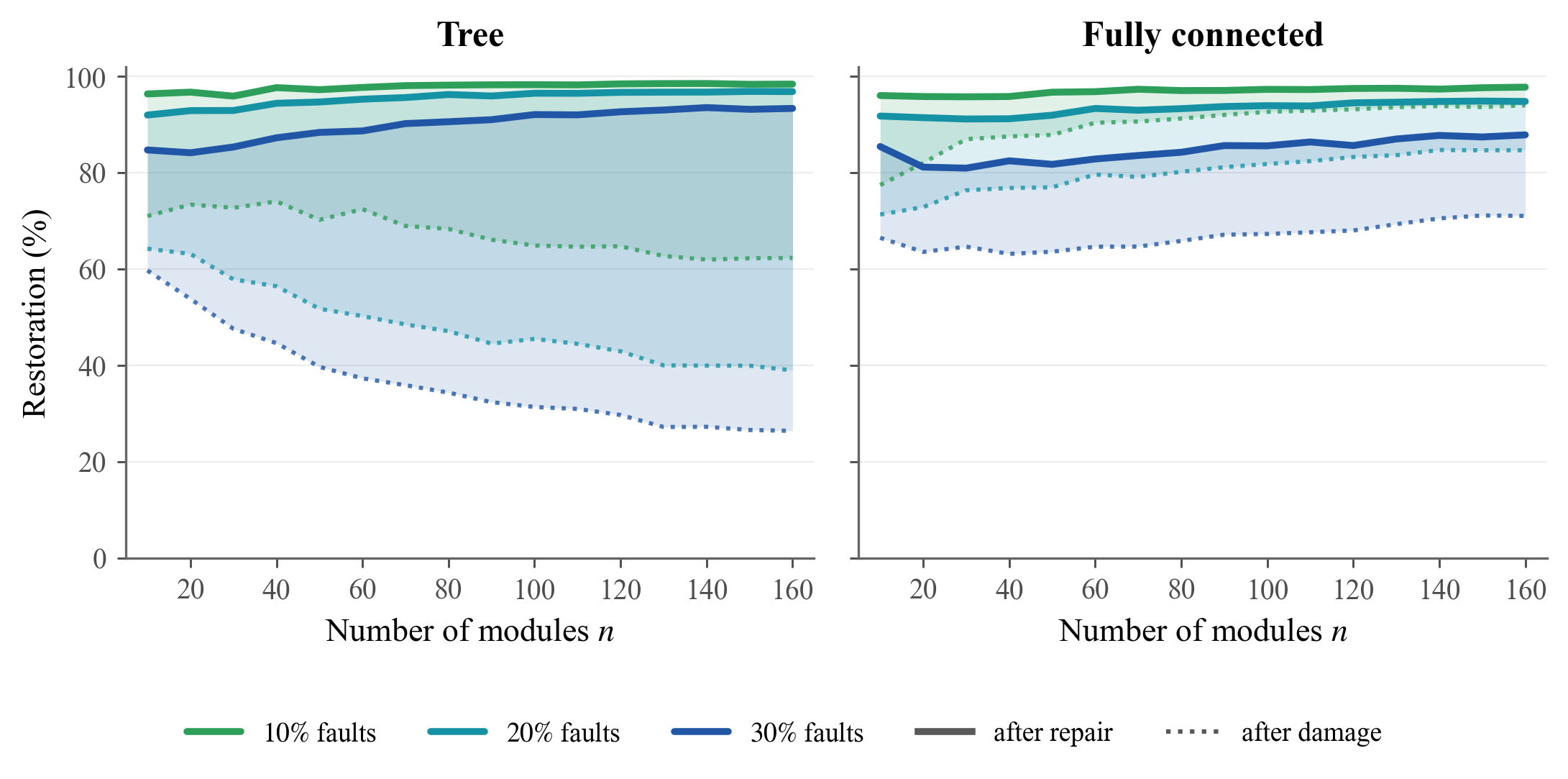}
  \caption{Connectivity restoration versus number of modules $n$. Top: full reconnection rate for tree (left) and fully connected (right) structures; solid random, dashed localized faults, colored by density; it falls with density and scale. Bottom: restoration under random faults, dotted immediately after damage and solid after phase~1, the gap being the policy's contribution, which stays $\gtrsim\!80\%$ and rises with $n$.}
  \label{fig:reconnection_rate_overlay}
  \label{fig:restoration_fraction}
\end{figure}

\section{Simulations}
\label{sec:simulation}

We evaluate the coagulation--restructuring policy by Monte Carlo on \emph{tree} (acyclic) and \emph{fully connected} (FC, dense multi-neighbor) lattice assemblies of $n\in[10,160]$ modules. Both topologies are grown by a random walk on the cubic lattice, an FC structure bonding each new module to all its placed neighbors and a tree to a single parent. A damage event is a \emph{random} (uniform) or \emph{localized} (contiguous, breadth-first-grown) fault set, drawn by rejection sampling until it disconnects the active graph; we sweep three densities ($10/20/30\%$) at $500$ trials per cell. A parameter sweep sets the operating point: reconnection is insensitive to the safety radius over $r_{\mathrm{safe}}\in[2,4]$ (we take the cheapest, $r_{\mathrm{safe}}=2$); exploration aids dense FC structures (reconnection rising from $\approx\!18\%$ at $\varepsilon=0$ to $\approx\!34\%$ at $\varepsilon=1$) and is neutral on trees (we use $\varepsilon=1.0$); and denser token emission improves reconnection and is affordable under the decoupled communication budget (we use $\tau_{\mathrm{gen}}=2$).

We report four quantities. \emph{Full reconnection rate} is the fraction of trials whose final active subgraph is a single connected component. \emph{Restoration} is $L/A$, the fraction of the $A$ active (non-fault) modules lying in the largest active component ($L$) at phase-1 end: it is $100\%$ when the survivors form one body, and excluding faults from both terms makes it invariant to how many modules were lost. We also report the shape difference $\mathrm{diff}(P_t,P_{t_{\mathrm f}})$ between the pre-damage and final post-restructuring configurations, and mean phase-1 move counts. Only trials whose fault set actually disconnects the post-damage subgraph are counted as meaningful.

\subsection{Physical Realization}

We evaluate in a custom PyBullet rigid-body simulator: the decision-level policy of Section~\ref{sec:approach} runs exactly as specified, but as a contact-rich rigid-body world rather than kinematic placement. A $90^\circ$ pivot is a \emph{rolling-sphere coupling} between an agent and its bonded neighbor (one sphere rolling without slipping around another), settling into bonding tolerance. Being force-driven, each pivot incurs residual pose error, occasional non-convergence, and attitude drift. Clearance is certified at the lattice level, by a target-cell occupancy check plus the anti-oscillation memory, never by swept-volume queries. On completing a pivot an agent re-bonds to its new lattice neighbors within tolerance, and a non-converging pivot is reversed or snapped to the nearest cell, so only completed, re-bonded pivots count as moves.

\paragraph{Module geometry and the locality assumption}
Each module is a unit-diameter sphere with cardinal contact connectors, as in M-Blocks~\cite{romanishin2015mblocks}. This is load-bearing for locality: the criticality test (Section~\ref{sec:crittest}) and pivot selection (Section~\ref{sec:pivotselection}) consult only a bounded neighborhood, which certifies a safe pivot only because a sphere rotating $90^\circ$ about a contact neighbor sweeps exactly the source and target lattice cells; rotational symmetry thus collapses the clearance check to a lattice-cell occupancy check among graph-local modules. Non-spherical modules break this (a cube sweeps corners through graph-distant cells), so certifying clearance would need geometric awareness beyond the local radius. We therefore treat cubes, polyhedral voxels, and articulated modules as future work.

\subsection{Results}
\label{sec:results}

\paragraph{Two regimes of connectivity restoration}
The central finding is that \emph{full reconnection} and \emph{restoration} behave oppositely with scale (Table~\ref{tab:results_summary_stacked}). Full reconnection, the all-or-nothing measure of complete reunification, falls steeply with both density and $n$, and less so on trees than on FC (Fig.~\ref{fig:reconnection_rate_overlay}), reaching $\approx\!0\%$ for $30\%$ random damage at $n=160$. Restoration instead stays high, and is best read as the policy's \emph{improvement} over the post-damage state (Fig.~\ref{fig:restoration_fraction}): on trees, where damage shatters the structure, the gain is large and grows with severity (at $30\%$, $n=160$ the largest surviving component rises from $\approx\!26\%$ to $\approx\!93\%$ of survivors, a $+67$-point gain); dense FC structures fragment less, so the policy's contribution is smaller but the achieved level is comparable. Either way restoration stays $\gtrsim\!80\%$ and rises with $n$, so even a ``$2\%$-reconnection'' trial has gathered the large majority of survivors into one body. The residual fragments are a limit of strictly-local greedy repair: the last fragments are too far apart for any local pivot to bridge. Phase-1 effort grows with $n$ and density to a few hundred moves at $n=160$.

\paragraph{Spatial distribution dominates fault amount}
At equal density, the spatial pattern matters more than the amount. \emph{Localized} faults are far easier than \emph{random} (at $n=160$, tree $20\%$: $38\%$ vs.\ $10\%$; $30\%$: $20\%$ vs.\ $\approx\!1\%$; clustered falls between), because localized damage leaves one large component plus a compact hole the wavefront bridges directly, whereas random damage shatters the structure into many fragments that must each be reattached.

\paragraph{The stress-sharing gradient is a near-free gain}
No prior method targets this exact setting (strictly local, lattice-constrained, connectivity-safe repair), so there is no baseline in the literature we can readily compare against; we therefore evaluate our method against an ablation of its own directional bias. Selecting pivots uniformly at random instead of by fault alignment~\eqref{eq:pivot_alignment} while keeping the rest of the pipeline intact costs $\approx\!5$ points of reconnection on both topologies (tree $38\%\!\to\!33\%$, FC $33\%\!\to\!27\%$) at essentially no reduction in computation.

\paragraph{Shape recovery is limited}
Restructuring recovers only modest shape (a mean improvement of $\approx\!0.8$ points, and not monotonic), because the same criticality test that keeps the structure connected forbids any displaced module that has become an articulation point from pivoting home, so only a non-critical minority can move. Meaningful shape recovery would require relaxing the strict connectivity guarantee, e.g.\ permitting bounded temporary disconnections; we therefore leave it to future work; the shape values in Table~\ref{tab:results_summary_stacked} are post-restructuring, reported mainly to document this limit.

\section{Conclusions}
We presented a decentralized stress-sharing strategy that lets a damaged modular spacecraft repair itself using only local information and connectivity-safe pivots, validated under rigid-body physics. Three findings stand out. Consolidation, gathering the survivors into a single body, stays high and improves with assembly size, even where full reconnection becomes rare. The spatial concentration of damage, not its magnitude, is the primary determinant of repairability. And the residual failures at high density are a limit of the strictly-local greedy nature of our approach. The locality of the method rests on the rotational symmetry of spherical modules, whose pivot sweep stays within the source and target cells; other geometries would require longer-range clearance certification.

This limit points directly at the next step: a lightweight longer-range rendezvous mechanism, invoked only for the few fragments local repair cannot close, could convert high consolidation into full reconnection. Non-spherical and articulated modules, and convergence and shape-recovery guarantees, remain open for future work as well. More broadly, these results suggest that resilience in space robotics may rely less on redundancy and more on distributed reorganization inspired by biological repair, letting modular spacecraft maintain function after unexpected failures without external intervention.

\bibliographystyle{IEEEtran}
\bibliography{references}

\end{document}